\newtheorem{prop}{Proposition}
\newcommand{\T}{\mathsf{T}}
\title{Efficient Numerical Algorithm for Large-Scale Damped Natural Gradient Descent}
\author[1]{Yixiao Chen\thanks{yixiaoc@princeton.edu}}
\author[2]{Hao Xie}
\author[3]{Han Wang\thanks{wang\_han@iapcm.ac.cn}}
\affil[1]{\small Program in Applied and Computational Mathematics, Princeton University}
\affil[2]{\small Beijing National Laboratory for Condensed Matter Physics and Institute of Physics, Chinese Academy of Sciences}
\affil[3]{\small Laboratory of Computational Physics, Institute of Applied Physics and Computational Mathematics}
\begin{document}
\maketitle

\begin{abstract}
We propose a new algorithm for efficiently solving the damped Fisher matrix in large-scale scenarios where the number of parameters significantly exceeds the number of available samples. 
This problem is fundamental for natural gradient descent and stochastic reconfiguration.
Our algorithm is based on Cholesky decomposition and is generally applicable.
Benchmark results show that the algorithm is significantly faster than existing methods.

\end{abstract}

\section{Introduction}

Natural gradient descent \cite{amari1998natural, martens2020new} is a fundamental optimization technique widely employed in the field of machine learning. 
Its quantum counterpart, known as stochastic reconfiguration \cite{sorella1998green, sorella2005wave}, holds paramount importance in variational quantum Monte Carlo methods. 
However, when applied to large-scale problems, such as training neural networks, a significant bottleneck emerges due to the computational burden of inverting the Fisher information matrix.
Although approximations like KFAC \cite{martens2015optimizing} have been introduced to mitigate this burden, they often fall short of replicating the performance of the exact method.

In large-scale scenarios, where the number of samples is typically much smaller than the number of model parameters, a damping term becomes essential.
In this paper, we propose a fast algorithm for inverting the damped Fisher information matrix, based on Cholesky decomposition. 
This algorithm is designed for GPU implementation and can be easily parallelized, promising to significantly improve the scalability and performance of natural gradient descent and stochastic reconfiguration.

\section{Results}

Our objective is to find a solution, $x$, to the following linear equation:
\begin{equation}
    \label{eq:fisher}
    \left( S^\T S + \lambda I \right) x = v
\end{equation}
In this equation $S$ is a $n \times m$ matrix, while $x$ and $v$ are both $m$-dimensional vectors.
The parameter $\lambda$ determines the damping strength and $I$ represents the $m \times m$ identity matrix.
In the context of natural gradient descent, $S$ is the (scaled) score matrix, defined as $S_{ij} = \frac{1}{\sqrt{n}} \frac{\partial \log P_\theta(x_i)}{\partial \theta_j}$, where $P_\theta(x_i)$ is the model's predicted probability of sample $x_i$ and $\theta_j$ is the $j$-th parameter of the model. $S^\T S$ yields the estimated Fisher information matrix.
Correspondingly, $v$ is the gradient of the loss function $L$ with respect to the parameters $\theta$, $v_j = \frac{\partial L}  {\partial \theta_j}$.
We are primarily interested in scenarios where $m \gg n$, meaning that the number of parameters is significantly larger than the number of samples.
This results in $S^\T$ being a tall-and-skinny matrix.

\begin{algorithm}[htbp]
    \caption{Cholesky Solve of Damped Fisher}
    \label{alg:chol}
    \begin{algorithmic}[1]
        \Require $S$, $v$, $\lambda$
        \Ensure $x$ that satisfies $\left( S^\T S + \lambda I \right) x = v$

        \State $W \gets S S^\T + \lambda \tilde{I}$ 
        \Comment{$W$ is $n \times n$, $\tilde{I}$ is $n \times n$ identity}

        \State $L \gets \text{Chol}\left( W \right)$ 
        \Comment{Cholesky decomposition, $L$ is $n \times n$ lower triangular}

        \State $Q \gets L^{-1}S$ \label{alg:chol:q}
        \Comment{$Q$ is $n \times m$}

        \State $x \gets \frac{1}{\lambda} \left( v - Q^\T Q v \right)$ \label{alg:chol:x}
        \Comment{$Q$ can be inlined to further reduce cost}
        \State \Return $x$
    \end{algorithmic}
\end{algorithm}

We propose Algorithm~\ref{alg:chol} to solve Eq.~\ref{eq:fisher}.
The correctness of the algorithm is straightforward to verify, and a proof can be found Appendix~\ref{app:proof}.
The computational complexity of the algorithm is $\order{n^3 + n^2m}$, which is determined by the Cholesky decomposition and the matrix multiplication. 
Compared to the naive method of directly inverting the matrix ($\order{m^3}$), our proposed algorithm is significantly faster when $m \gg n$.
Moreover, The memory requirement is reduced from $\order{m^2}$ to $\order{nm}$.
The algorithm can be easily parallelized, and the Cholesky decomposition can be efficiently implemented on GPU. 
We note that in practical implementation, the computation of $Q$ (line~\ref{alg:chol:q}) should be inlined into the calculation of $x$ (line~\ref{alg:chol:x}) to further reduce computational cost. 
The resulted $Q^\T Q v = S^\T L^{-\T} L^{-1} S v $ can be efficiently computed from right to left with triangular solve.

We implemented the algorithm in JAX and conducted tests on a single NVIDIA A100 GPU with 80\,GB of memory. 
We evaluated the algorithm's performance on problems with $m \sim 10^6$ parameters and $n \sim 10^3$ samples, which is beyond the capability of the naive inversion method.
Our benchmarking compared the proposed algorithm (``chol'') to two SVD-based methods (see Appendix~\ref{app:svd}): 
one using the CUDA kernel \verb|gesvda| (labeled ``svda''), and the other utilizing the fast SVD algorithm for tall-and-skinny matrices via solving the eigenproblem of the $n \times n$ matrix $S S^\T$ (labeled ``eigh''). 
The ``eigh'' SVD method is previously the fastest method in our experience. 

The benchmark results, shown in Fig.~\ref{fig:benchmark} illustrate the algorithm's consistent improvement over the two SVD-based methods.  
The speedup is particularly pronounced when the $n/m$ ratio is larger. 
The algorithm incurs minimal overhead and scales quadratically with $n$ and linearly with $m$, aligning with the theoretical complexity analysis for $m \gg n$.
The full data can be found in Appendix~\ref{app:data}.

\begin{figure}[htbp]
    \centering
    \subfigure[increasing $n$ with fixed $m = 10^6$]{\label{fig:bench_batch}\includegraphics[width=0.48\textwidth]{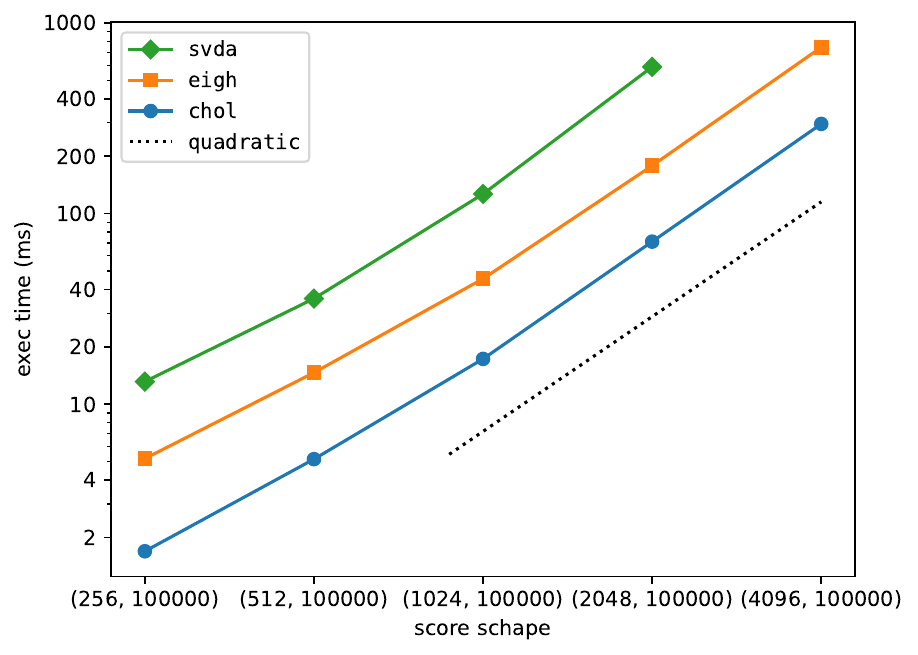}}
    % \hfill
    \subfigure[increasing $m$ with fixed $n = 2048$]{\label{fig:bench_param}\includegraphics[width=0.48\textwidth]{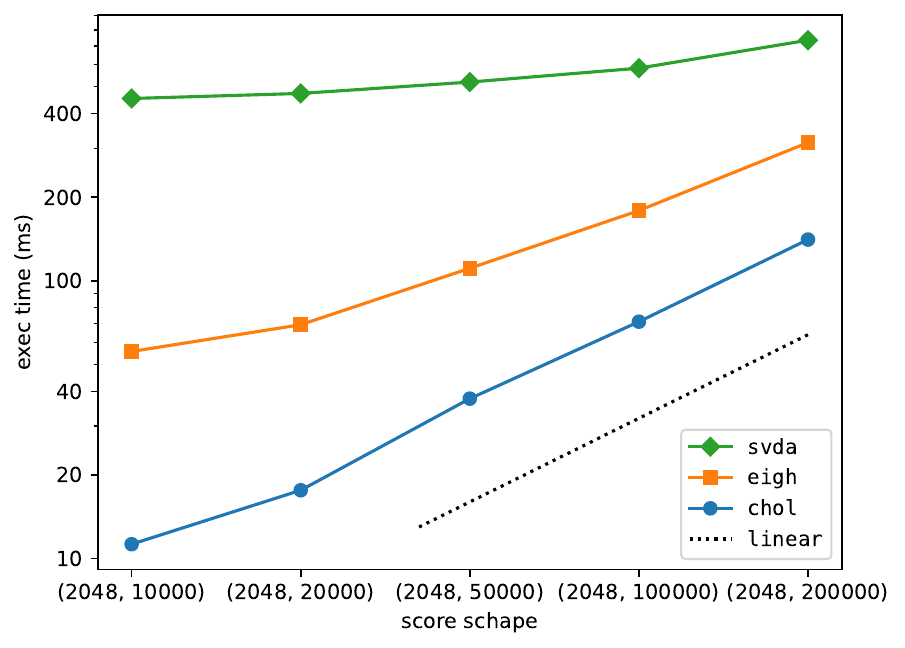}}
    \caption{Benchmark of the proposed algorithm (``chol'') against two SVD-based methods (``eigh'' and ``svda''),  with increasing samples ($n$) or parameters ($m$). Dotted lines represent the ideal scaling. Please refer to the main text for details.}
    \label{fig:benchmark}
\end{figure}

\section{Relation to other methods}

There are multiple other problems and methods that share the same structure as Eq.~\ref{eq:fisher}, 
such as damped least-squares (a.k.a. Levenberg-Marquardt algorithm \cite{gavin2019levenberg}) in optimization, ridge regression \cite{hoerl1970ridge} in statistics, and stochastic reconfiguration (SR) \cite{sorella2005wave} in variational quantum Monte Carlo. 
Our proposed algorithm can naturally be applied to these problems as well, and it offers substantial speedup advantages when $m \gg n$.

Particularly, in the context of SR, $S$ needs to be properly centered due to the unnormalized nature of the wave function, $S = \frac{1}{\sqrt{n}} \left( O - \overline{O} \right)$, 
where $O$ is the derivative of the logarithm of the wave function $O_{ij} = \frac{\partial \log \psi_\theta(x_i)}{\partial \theta_j} $ and $\overline{O}$ denotes the average of $O$ over the samples. 
In cases where the wave function is complex, $S$ becomes a complex matrix, and the transpose operation ($S^\T$) in Eq.~\ref{eq:fisher} is replaced by the Hermitian conjugate ($S^\dagger$). 
There are two variations in the SR algorithm in this complex case, differing in the definition of Fisher matrix: 
one uses the full complex matrix, $F = S^\dagger S$, while the other uses only the real part, $F = \real \left[S^\dagger S\right]$. 
The latter is more commonly employed in practice. 
Our algorithm can be readily adapted to both versions.
For the complex version, we can replace all transposes in our algorithm with Hermitian conjugates.
For the real part version, the score matrix $S$ can be replaced by the concatenation of its real and imaginary parts in the sample ($n$) dimension, as follows: $S \gets \text{Concat}\left[ \real(S), \imaginary(S) \right] $, while the rest of the algorithm remains unchanged.

It's worth noting that there are existing papers addressing large-scale SR, particularly for neural network wavefunctions, such as \cite{chen2023efficient} and \cite{rende2023simple}. 
These methods rely on the least-square structure of the SR procedure, meaning that the gradient $v$ is a linear combination of the rows of $S$, i.e., $v = S^\T f$. 
This requirement limits the choice of the loss function and prevents the use of regularization. 
In contrast, our algorithm does not have this limitation and can be applied to any loss function, including those used in Wasserstein quantum Monte Carlo \cite{neklyudov2023wasserstein}. 
When applied to SR, our algorithm is nearly identical to the one proposed in \cite{rende2023simple} (if Cholesky solve is used there), and the computational cost is almost the same.
The connection between the methods is discussed in Appendix~\ref{app:equiv}.
Our algorithm can also share the same parallelization strategy as illustrated in the supplement material of \cite{rende2023simple}.
% \recheck{
% It can be proved that our method is equivalent to \cite{rende2023simple} (see Appendix~\ref{app:equiv}) under the assumption that the right hand side has the structure $v = S^\T f$.
% It worth noting that our method does not require $v = S^\T f$, but it is necessary for~\cite{rende2023simple}, thus ours is more general than that was proposed by~\cite{rende2023simple}.
% }

Iterative methods, such as conjugate gradient descent, can efficiently solve Equation \ref{eq:fisher}. 
These methods typically scale linearly with both $n$ and $m$, but the number of iterations increase significantly when the matrix is ill-conditioned. 
This leads to slow convergence and higher computational costs. 
Our algorithm, on the other hand, is non-iterative and generally avoids such issues.

\appendix
\section{Proof of correctness}
\label{app:proof}
\begin{prop}Given matrix $S\in \mathbb R^{n\times m}$ and a parameter $\lambda \in\mathbb R^{+}$ being a large enough positive number so that $\left( S^\T S + \lambda I \right)$ is inversible, then the $x$ given by Algorithm~\ref{alg:chol} solves Equation~\eqref{eq:fisher}.
\end{prop}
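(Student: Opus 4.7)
The plan is to substitute the definitions from Algorithm~\ref{alg:chol} into $(S^\T S + \lambda I)x$ and reduce it to $v$ via a one-line ``push-through'' identity. First I would observe that because $\lambda>0$ and $SS^\T$ is positive semidefinite, the matrix $W = SS^\T + \lambda\tilde I$ is symmetric positive definite, so its Cholesky factor $L$ exists and is invertible; hence $Q = L^{-1}S$ is well-defined and
\begin{equation*}
Q^\T Q \;=\; S^\T L^{-\T} L^{-1} S \;=\; S^\T (LL^\T)^{-1} S \;=\; S^\T\bigl(SS^\T + \lambda\tilde I\bigr)^{-1} S.
\end{equation*}

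Next I would left-multiply the definition $x = \tfrac{1}{\lambda}(v - Q^\T Q v)$ by $(S^\T S + \lambda I)$ and rearrange, so that proving $(S^\T S+\lambda I)x = v$ reduces to the single identity
\begin{equation*}
(S^\T S + \lambda I)\, Q^\T Q \;=\; S^\T S.
\end{equation*}
Using the expression for $Q^\T Q$ above, the left-hand side equals $(S^\T S + \lambda I)\, S^\T(SS^\T + \lambda\tilde I)^{-1}S$, and the core step is the elementary algebraic identity $(S^\T S + \lambda I)S^\T = S^\T(SS^\T + \lambda\tilde I)$, which one verifies by expanding both sides. Substituting this lets $(SS^\T + \lambda\tilde I)$ cancel against its inverse, leaving exactly $S^\T S$.

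There is essentially no hard step: the only place that could go wrong is the invertibility of $L$, which I would handle up front as in the first paragraph, noting that this is precisely the hypothesis that $\lambda$ is large enough for $S^\T S + \lambda I$ to be invertible (any $\lambda>0$ suffices). The proof is then a three-line calculation built around the push-through identity, and I would present it in exactly that order: establish invertibility of $L$, rewrite $Q^\T Q$, apply the identity, and conclude $(S^\T S + \lambda I)x = v$.
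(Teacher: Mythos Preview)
Your proof is correct. Both your argument and the paper's hinge on the same algebraic relationship between $S^\T S+\lambda I$ and $SS^\T+\lambda\tilde I$, but the organization differs. The paper introduces the true solution $x^\ast=(S^\T S+\lambda I)^{-1}v$, pushes the equation through $S$ and the Cholesky factor $L$ to obtain $S^\T S\,x^\ast = Q^\T Q\,v$, and then compares this with the algorithm's relation $\lambda x = v - Q^\T Q\,v$ to conclude $x=x^\ast$. You instead perform a direct verification: substitute the algorithm's output into $(S^\T S+\lambda I)x$ and reduce it to $v$ via the push-through identity $(S^\T S+\lambda I)S^\T = S^\T(SS^\T+\lambda\tilde I)$. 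Your route is a bit more streamlined, avoiding the auxiliary $x^\ast$ and the cancellation step at the end; the paper's route stays closer to the Cholesky factors $L,Q$ throughout rather than immediately collapsing $Q^\T Q$ to $S^\T(SS^\T+\lambda\tilde I)^{-1}S$. Neither has any real advantage in generality or rigor.
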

\begin{proof}
    We denote that $x^\ast = \left( S^\T S + \lambda I \right)^{-1}v$, then 
\begin{align*}
    \left( S^\T S + \lambda I \right) x^\ast = v 
    \:\Rightarrow\:
    S S^\T S x^\ast + \lambda S x^\ast = Sv 
    \:\Rightarrow\:
    \left(S S^\T + \lambda \tilde I \right) S x^\ast = Sv 
\end{align*}
The step 1 and 2 of Algorithm~\ref{alg:chol} yields $\left(S S^\T + \lambda \tilde I \right) = LL^\T$, and step 3 gives $S = LQ$, then
\begin{align*}
    \left(S S^\T + \lambda \tilde I \right) S x^\ast = Sv 
    \:\Rightarrow\:
    L L^\T LQ x^\ast = LQ v
    \:\Rightarrow\:
    Q^\T L^\T LQ x^\ast = Q^\T Q v
    \:\Rightarrow\:
    S^\T S x^\ast = Q^\T Q v
\end{align*}
The output $x$ of Algorithm~\ref{alg:chol} satisfies
$\lambda x = v -  Q^\T Q v$, thus
\begin{equation}\label{eq:ap:1}
    S^\T S x^\ast + \lambda x = v.
\end{equation}
Comparing \eqref{eq:ap:1} with \eqref{eq:fisher} gives $x = x^\ast$ due to the positiveness of $\lambda$.
\end{proof}

\section{Connection between the methods}
\label{app:equiv}
% \begin{prop}Given matrix $S\in \mathbb R^{n\times m}$ and a parameter $\lambda \in\mathbb R^{+}$ being a large enough positive number so that $\left( S^\T S + \lambda I \right)$ is inversible. 
% The right hand side of Equation~\eqref{eq:fisher} has the structure $v = S^\T f$. 
% The Algorithm~\ref{alg:chol}  is equivalent to the way proposed by~\cite{rende2023simple}, i.e.~solving Equation~\eqref{eq:fisher} by $S^\T (SS^\T +\lambda\tilde I)^{-1} f$.
% \end{prop}
By our method, the solution  $x$ is computed by
$\frac 1\lambda(v - Q^\T Q v)$. Noticing that 
$ Q^\T Q v = S^\T L^{-\T} L^{-1} S v = S^\T (SS^\T +\lambda\tilde I)^{-1} Sv$, when the right hand side has the structure $v = S^\T f$,  we have
\begin{align}\label{eq:ap:ours}
    x = \frac 1\lambda S^\T f  - \frac 1\lambda S^\T (SS^\T +\lambda\tilde I)^{-1} S S^\T f
\end{align}
We denote the method proposed by~\cite{rende2023simple}, 
\begin{align}\label{eq:ap:rende}
    x_{\mathrm{rvb}} = S^\T (SS^\T +\lambda\tilde I)^{-1} f
\end{align}
The apparently different expressions~\eqref{eq:ap:ours} and \eqref{eq:ap:rende} are actually equivalent. 
We consider the difference between $x_{\mathrm{rvb}}$ and $x$,
\begin{align*}
    x_{\mathrm{rvb}} - x  &= 
    S^\T (SS^\T +\lambda\tilde I)^{-1} f - 
    \frac 1\lambda S^\T f  +
    \frac 1\lambda S^\T (SS^\T +\lambda\tilde I)^{-1} S S^\T f \\
    &=
    S^\T (SS^\T +\lambda\tilde I)^{-1} f - 
    \frac 1\lambda S^\T f  +
    \frac 1\lambda S^\T (SS^\T +\lambda\tilde I)^{-1} (S S^\T + \lambda\tilde I)  f
    - \frac 1\lambda S^\T (SS^\T +\lambda\tilde I)^{-1} \lambda\tilde I f \\
    &= 0
\end{align*}

\section{SVD-based methods}
\label{app:svd}
We provide an outline of the SVD-based methods for solving Eq.~\ref{eq:fisher} that we used in our benchmark. 
Suppose we have the (thin) SVD of $S$ as $S = U \Sigma V^\T$, where $U$ and $V$ are $n \times n$ and $m \times n$ orthogonal matrices and $\Sigma$ is a $n \times n$ diagonal matrix with non-negative entries. 
Then the solution to Eq.~\ref{eq:fisher} is given by
\begin{equation}
    \label{eq:ap:svd}
    x = V \left( \Sigma^2 + \lambda \tilde{I} \right)^{-1} V^\T v + \frac{1}{\lambda} \left( v - V V^\T v \right)
\end{equation}
The correctness of this method can be verified by direct substitution, noting that $S^\T S = V \Sigma^2 V^\T $, $V^\T V = \tilde{I}$ and $V V^\T$ is a projection matrix.

The two SVD methods mentioned in the main text differ in the way of computing the SVD of $S$.
For the ``svda'' method, we solve the SVD by calling the CUDA kernel \verb|gesvda| from PyTorch. 
For the ``eigh'' method, we first compute the eigenvalue decomposition of $S S^\T$ as $S S^\T = U \Sigma^2 U^\T$, and then finish the SVD by $V = S^\T U \Sigma^{-1}$.

\section{Benchmark data}
\label{app:data}
We provide in Table~\ref{tab:benchmark} the detailed benchmark data for Fig.~\ref{fig:benchmark}. 
The ``svda'' method is not available for shape (4096, 100000) due to memory limit. 
The difference between two (2048, 100000) rows is due to runtime fluctuation.

\begin{table}[htbp]
    \label{tab:benchmark}
    \caption{Benchmark results of shape ($n$, $m$). Time is in milliseconds }
    \centering
    \newcolumntype{R}{>{\raggedleft\arraybackslash}X}
    \begin{tabularx}{0.6\textwidth}{l|RRR}
        \toprule
        shape & chol & eigh & svda \\
        \midrule
        (256, 100000) & 1.69 & 5.18 & 13.14 \\
        (512, 100000) & 5.15 & 14.64 & 35.82 \\
        (1024, 100000) & 17.28 & 45.51 & 126.65 \\
        (2048, 100000) & 71.25 & 178.27 & 588.04 \\
        (4096, 100000) & 295.20 & 745.17 & N/A \\
                \midrule
        (2048, 10000) & 11.27 & 55.69 & 453.27 \\
        (2048, 20000) & 17.63 & 69.49 & 472.67 \\
        (2048, 50000) & 37.67 & 110.99 & 519.34 \\
        (2048, 100000) & 71.27 & 179.01 & 582.82 \\
        (2048, 200000) & 140.79 & 314.47 & 734.84 \\
        \bottomrule
    \end{tabularx}    
\end{table}

\bibliographystyle{alpha}
\bibliography{ref}

\end{document}